
\documentclass[conference]{IEEEtran}
\usepackage{flushend}
\usepackage{times}
\usepackage{amsmath, amsthm}
\usepackage{amsfonts}
\usepackage{mathrsfs}
\usepackage{latexsym}
\usepackage{amssymb}
\usepackage{enumerate}
\usepackage[left=1.62cm,right=1.62cm,top=1.9cm]{geometry}
\usepackage{xcolor}
\usepackage{upref}
\usepackage{graphicx}
\usepackage{color}
\usepackage{scrextend}
\usepackage[ruled]{algorithm2e} 
\usepackage{algorithmicx}
\usepackage{algpseudocode}
\usepackage{verbatim}
\usepackage{multicol}
\usepackage{colortbl}
\usepackage{multirow}
\newtheorem{theorem}{Theorem}
\usepackage{makecell}

\usepackage[bookmarks=false]{hyperref}
\usepackage{mathrsfs}
\definecolor{mygray}{gray}{.9}
\usepackage{xcolor}
\usepackage[backend=bibtex,style=ieee]{biblatex}
\addbibresource{ref.bib}
\usepackage{booktabs}
\usepackage{textcomp}
\usepackage{epstopdf}
\usepackage{tikz}
\usetikzlibrary{shapes ,arrows , positioning }
\theoremstyle{definition}


\newtheorem{rem}{Remark}

\makeatletter
\makeatother

\begin{document}
\title{Load Balancing in Federated Learning}

 \author{
   \IEEEauthorblockN{Alireza Javani, and Zhiying Wang}
   \IEEEauthorblockA{
                     Center for Pervasive Communications and Computing\\
  					University of California, Irvine\\
                     \{ajavani, zhiying\}@uci.edu
                     \vspace{-0.8cm}}
 
 }

\maketitle

\begin{abstract}
Federated Learning (FL) is a decentralized machine learning framework that enables learning from data distributed across multiple remote devices, enhancing communication efficiency and data privacy. Due to limited communication resources, a scheduling policy is often applied to select a subset of devices for participation in each FL round. The scheduling process confronts significant challenges due to the need for fair workload distribution, efficient resource utilization, scalability in environments with numerous edge devices, and statistically heterogeneous data across devices. This paper proposes a load metric for scheduling policies based on the Age of Information and addresses the above challenges by minimizing the load metric variance across the clients. Furthermore, a decentralized Markov scheduling policy is presented, that ensures a balanced workload distribution while eliminating the management overhead irrespective of the network size due to independent client decision-making. We establish the optimal parameters of the Markov chain model and validate our approach through simulations. The results demonstrate that reducing the load metric variance not only promotes fairness and improves operational efficiency, but also enhances the convergence rate of the learning models. 

\end{abstract}

\vspace{-0.2cm}
\section{Introduction}
\vspace{-0.2cm}
Federated learning (FL), introduced by McMahan et al. \cite{mcmahan2017communication}, emerged as a solution to the limitations of traditional machine learning models that require centralized data collection and processing. FL enables client devices to collaboratively train a global model while keeping all the training data localized, thus addressing privacy concerns. 
%
%
The iterative training process of FL involves clients updating local models with their data before sending these updates to a central server for aggregation. 
To reduce communication load, a client selection policy can be applied such that only a subset of the clients transmits to the central server in a round. For example, random selection \cite{li2019convergence} selects each client uniformly at random in each round. However, client selection may lead to an unbalanced communication load and uneven contribution to the global model among clients. 

In this paper, we propose the load metric $X$, defined as the number of rounds between subsequent selections of a client. Assuming the distribution of the load metric for all clients is identical, our goal is to minimize $\operatorname{Var}[X]$, the variance of $X$.

The load metric $X$ is related to the concept of Age of Information (AoI) \cite{kaul2012real}, which is the time elapsed since the last transmission of a client. In fact, $X$ is called the peak age \cite{costa2014age}. While most previous approaches in studying AoI in networking problems involve minimizing the {average} age or the average peak age \cite{yates2021age}, this paper focuses on reducing the variance of $X$ given the probability of selecting each client. 

The benefits of minimizing the variance of $X$ are as follows:
\begin{itemize}
    \item \textbf{Predictable Update Intervals and Load Balancing:} By reducing the variance of $X$, we stabilize peak age and ensure consistent intervals between client selections. This predictability implies a fair and balanced selection process and improves operational efficiency.
    \item \textbf{Improved Convergence and Accuracy:} A lower variability in \(X\) is essential for maintaining a diverse and representative global model, preventing biases and enhancing overall model accuracy.
\end{itemize}


We introduce a decentralized policy based on a Markov chain. Here, the state of the Markov chain is determined by the age of the client, and the transition probabilities represent an age-dependent selection policy.  
We establish the optimal transition probabilities of the Markov model that minimize $\operatorname{Var}[X]$.
We demonstrate that the optimal Markov model is equivalent to selecting the oldest age. The Markov model is suitable for decentralized client selection, alleviating the management overhead of the central server. Furthermore, the Markov model can be potentially modified to allow for real-time adjustment of client selection probabilities, to accommodate the dynamic nature of client data quality, computation resources, and network configurations and conditions. 

Simulation results demonstrate that the application of the Markov model enhances the convergence rate of the learning processes, achieving target accuracies in fewer communication rounds compared to random selection, with improvements exceeding $9\%$ and more than $20\%$ in certain datasets. These findings confirm the efficacy of our approach in addressing issues related to client overloading and under-utilization while maintaining efficiency throughout the learning process.

{\bf Related Work.}
The advancements in improving the Federated Averaging (FedAvg) algorithm by McMahan et al. \cite{mcmahan2017communication} in terms of partial participation include exploring system diversity and non-IID data distribution.
Importance sampling has been employed, focusing on gradient computations on the most informative data, thereby enhancing learning outcomes \cite{balakrishnan2021resource}. Complementing this, various client selection strategies have been developed. A loss-based sampling policy \cite{goetz2019active} targets clients with high loss values to quicken convergence, though it requires careful hyper-parameter tuning. Alternatively, a sample size-based sampling policy \cite{fraboni2021clustered} opts for clients with extensive local datasets, which, while effective in accelerating convergence, can falter in non-IID scenarios.

Addressing further challenges such as data staleness and the efficient use of network resources, the Age of Information metric has been adopted \cite{yang2020age}. By prioritizing updates based on their timeliness, AoI-centric approaches enhance both accuracy and convergence rates, proving particularly effective in dynamic FL environments. In \cite{wang2022age,10279272,10255731,khan2023value}, authors use the AoI metric to determine device selection and resource allocation in federated learning over wireless networks. Their approach manages sub-channel assignments and power allocation. To the best of our knowledge, this paper is the first to address the load balancing issue based on Age of Information.

{\bf Notations.} Random variables are denoted by capital letters. $E[\cdot]$ and $\operatorname{Var}[\cdot]$ denote the expectation and variance of a random variable, respectively. 


\vspace{-0.2cm} 
\section{Problem Setting}\label{sec:problem}
\vspace{-0.2cm}

This section briefly reviews FedAvg algorithm \cite{mcmahan2017communication}. The load metric $X$ is defined and the optimization problem on $\operatorname{Var}[X]$ is formulated afterward.

{\bf FedAvg algorithm.} Consider a group of \(n\) clients coordinated by a central server to collaboratively train a common neural model using federated learning (FL). Each participating client \(i\), \(i \in \{1,2,\dots,n\}\), has a dataset \(D_i = \{X_i, Y_i\}\), where \(X_i\) is the input vector and \(Y_i\) is the corresponding output vector. These datasets are used to parameterize local models directly through weights \(W_i\). The goal of FL is to optimize the collective average of each client’s local loss function, formulated as:
\begin{align}
\min_{\{W_i\}_{i=1}^{n}} \sum_{i=1}^{n}\frac{|D_i|}{\sum_{j}|D_j|} L(X_i, Y_i, W_i),    
\end{align}
where \(n\) represents the number of clients participating in the training, $|\cdot|$ denote the cardinality of a set, and \(L\) denotes the loss function. In this paper, we assume $|D_i|$ are all equal.
FedAvg algorithm comprises two steps that are repeated until the model converges or a set number of rounds is completed:

(i) Local training: In Round $t$, clients use their local data \(D_i\) to update the local model parameters to \(W_i^{(t)}\). This involves calculating the gradient of the loss function \(L\), typically through batches of data to perform multiple gradient descent steps.

(ii) Global aggregation: all clients periodically send their updated parameters \(W_i^{(t)}\) to the central server. The central server aggregates these parameters using a specified algorithm, commonly by averaging: \(W_{global}^{(t)} = \frac{1}{k} \sum_{i=1}^{k} W_i^{(t)}\).  The aggregated global model parameters \(W_{global}^{(t)}\) are then sent back to all participating clients.
Each client sets its local model parameters to \(W_{global}^{(t)}\). 




{\bf Client selection and load balancing.}
Since communication usually occurs through a limited spectrum, only a subset $k$ out of $n$ total clients can update their parameters during each global aggregation round. This necessitates a strategy for \emph{client selection}. 

We aim to achieve load balancing by equalizing the number of iterations between consecutive client selections, assuming each client is selected with equal probability $\frac{k}{n}$. To that end, we define the \emph{load metric} \(X\) as the random variable for the number of rounds between subsequent selections of a client. Assume $X$ follows the same distribution for all clients. We propose the following optimization problem: 
\begin{align}
  &\min_{\mathcal{S}} \operatorname{Var}[X], \label{eq:min_var}\\
  &\text{s.t. } P(S_i^{(t)}=1)=\frac{k}{n}, i \in \{1,\dots,n\}, t \in \{1,\dots,T\}.\label{eq:constraint}
\end{align}
Here, $\mathcal{S}$ represents the set of permissible client selection policies, and $T$ is the total number of communication rounds. In addition, \(S_i^{(t)}\) takes value \(1\) if client \(i\) is selected during the $t$-th round, and takes value \(0\) otherwise.

Despite that all clients share the same selection probability and average workload over all iterations, as required in constraint \eqref{eq:constraint}, $\operatorname{Var}[X]$ focuses on the workload dynamics even when a small number of iterations is considered.
Minimizing \(\operatorname{Var}[X]\) creates predictable update intervals and promotes equitable load distribution among all clients. 
Moreover, the staleness of local updates, represented by a large $X$, may slow down the convergence of the training process; and frequent updates from any single client, or a small $X$, may disproportionately affect the model. 
Thus, \eqref{eq:min_var} can improve learning convergence and accuracy by ensuring consistent data freshness, which is verified in our simulation.  

{\bf Relation to Age of Information.}
A relevant metric is AoI, defined as the number of rounds elapsed since the last time a client was selected.
The age $A_i$ for Client \(i\) in Round $t$ evolves as follows:
\begin{align}
A_i^{\left ( t+1 \right )} =\left ( A_i^{\left ( t \right )} +1 \right ) \left ( 1-S_i^{\left ( t \right )}  \right ),  S_i^{\left ( t \right ) }\in \left \{ 0,1 \right \},
\end{align}
where \(A_i^{(0)}=0\). Therefore, in each round, a client's age increases by one if it is not selected and it resets to zero if it is selected.
The load metric $X$ is equal to the peak age, or the age before a client is selected.


\vspace{-0.2cm}
\section{Client Selection}\label{sec:selection}
\vspace{-0.2cm}
This section investigates random selection and a decentralized policy based on a Markov chain, assesses their load metric variances, and explores how to optimize the Markov model to contribute to a balanced  federated learning environment.



    


\vspace{-0.2cm}
\subsection*{Random Selection}
\vspace{-0.2cm}
The random selection method \cite{li2019convergence} involves choosing \(k\) out of \(n\) clients uniformly at random in each round for federated training. This approach, modeled mathematically by a geometric distribution, assumes each client has an equal probability \(p = \frac{k}{n}\) of being selected per round. The selection dynamics are described by:
\begin{align}
P(X=x) = p(1-p)^{x-1},
\end{align}
where \(X\) is the number of rounds until a client is selected and $x \in \{1,2,\dots\}$. The mean and variance of \(X\) are:
\begin{align}
&E[X] = \frac{1}{p} = \frac{n}{k}, \\
&\operatorname{Var}[X] = \frac{1-p}{p^2} = \frac{n(n-k)}{k^2}.
\end{align}

\subsection*{Decentralized Decision-Making Based on Local State}
This policy utilizes a decentralized approach where each client independently decides whether to send its data based on its current Age of Information, ranging from \(0\) to \(m\), with \(m\) representing the maximum permissible age. 
We formulate a Markov chain whose states represent the age of the client.
The probability of sending data at State $j$ is \(p_j\), for \(j = 0, 1, \ldots, m\), and in this case the state transitions to State $0$. With probability $1-p_j$, a client does not send data at State \(j\). Accordingly, the state transitions to State \(j+1\) for $j=0,1,\dots,m-1$; and the state remains the same for $j=m$.
States, transitions, and transition probabilities are depicted in Figure \ref{fig:markov}.

For a fair comparison with the random selection method, we impose a condition that the average number of clients selected in each round at steady state is \(k\) out of \(n\). 

This approach allows each client to autonomously manage its participation based on its current AoI without requiring communication or coordination with the central server or other clients. 

\begin{figure}
    \centering
\scalebox{0.6}{
\begin{tikzpicture}[->, >=stealth', auto, semithick, node distance=3cm, font=\Large]
\tikzstyle{state}=[circle, draw, minimum size=1cm]

\node[state] (zero) {0};
\node[state] (one) [right of=zero] {1};
\node[state] (two) [right of=one] {$2$};
\node (dots) [right=1cm of two] {$\cdots$}; %
\node[state] (m-1) [right=1cm of dots] {$m-1$}; %
\node[state] (m) [right of=m-1] {$m$};

\path (zero) edge [loop above] node {$p_0$} (zero)
      (zero) edge [bend left=50] node [above] {$1-p_0$} (one)
      (one) edge [bend left=50] node [below] {$p_1$} (zero)
      (one) edge [bend left=50] node [above] {$1-p_1$} (two)
      (two) edge [bend left=50] node [below] {$p_2$} (zero)
      (m-1) edge [bend left=50] node [above] {$1-p_{m-1}$} (m)
      (m-1) edge [bend left=50] node [below] {$p_{m-1}$} (zero)
      (m) edge [loop above] node {$1-p_m$} (m)
      (m) edge [bend left=50] node [below] {$p_m$} (zero);

\end{tikzpicture}
} 
\caption{Markov chain with $m+1$ states.}
\label{fig:markov}
\end{figure}
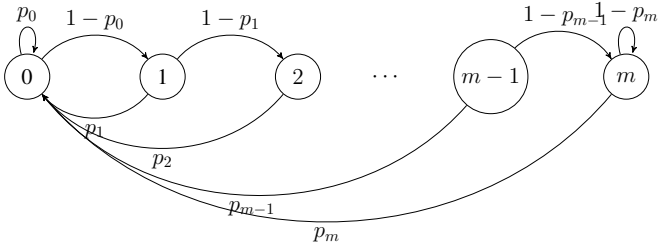

In the following, we find the optimal transition probabilities to minimize the variance of $X$ assuming that the client is at the steady state.
We first develop the expression of $\operatorname{Var}[X]$, then find the optimal Markov model with maximum permissible age $m=1$ in Theorem \ref{thm:optimal_m1}, and at last establish the optimal Markov model with a general $m$ in Theorem \ref{thm:optimal_general}.

Steady-state probabilities for the Markov chain are as follows:
\begin{align}
&\pi_0= \sum_{i=0}^{m} \pi_i p_i=\frac{k}{n}\\
&\pi_i = \pi_{i-1} \cdot (1 - p_{i-1}) \quad i \in \{1,2,...,m-1\}
\end{align}
\begin{align}
&\pi_m= \pi_{m-1} \cdot (1 - p_{m-1}) +\pi_{m} \cdot (1 - p_{m}) \\
&\sum_{i=0}^{m} \pi_i = 1.
\end{align}
By solving these equations we obtain:
\begin{align}
    \pi_0&= \frac{k}{n}= \frac{1}{1+\sum_{i=0}^{m-2} \prod_{j=0}^{i}(1-p_j) + \frac{1}{p_m} \prod_{j=0}^{m-1}(1-p_j)}, \label{eq:pi_0}
\end{align}
for $i \in \{1,..,m-1\}$,
\begin{align}
\pi_i&= \frac{\prod_{j=0}^{i-1}(1-p_j)}{1+\sum_{i=0}^{m-2} \prod_{j=0}^{i}(1-p_j) + \frac{1}{p_m} \prod_{j=0}^{m-1}(1-p_j)}, 
\end{align}
\begin{align}
\pi_m&= \frac{\frac{1}{p_m}\prod_{j=0}^{m-1}(1-p_j)}{1+\sum_{i=0}^{m-2} \prod_{j=0}^{i}(1-p_j) + \frac{1}{p_m} \prod_{j=0}^{m-1}(1-p_j)}. 
\end{align}
Our objective is to compute and minimize the variance of 
$X$, which represents the number of rounds required for a specific client to be returned to State $0$. Noting that \(\operatorname{Var}[X] = E[X^2] - E[X]^2\), we will derive both terms in this expression. Let us define $E_{0}=E[X]$, and $E_i$ to be the average time it takes to go to State $0$ from State $i$. Then we have the following set of equations:
\begin{align}
    E_i &= 1+ (1-p_i)E_{i+1}, \quad i \in \{0,1,...,m-1\},\\
    E_m &= 1+ (1-p_m)E_m. \label{eq:Em}
\end{align}
Equation \eqref{eq:Em} implies that 
$E_m= \frac{1}{p_m}$.
By \eqref{eq:pi_0} and solving the above equations, we can calculate $E_0$ as:
\begin{align}
   E_0= 1+ \sum_{i=0}^{m-2} \prod_{j=0}^{i} (1-p_j) + \frac{1}{p_m} \prod_{i=0}^{m-1}(1-p_i) = \frac{n}{k}, \label{e0}
\end{align}
and for $k \in \{1...,m-1\}$ we have:
\begin{align} \label{ek}
       E_k = 1+ \sum_{i=k}^{m-2} \prod_{j=k}^{i} (1-p_j) + \frac{1}{p_m} \prod_{i=k}^{m-1}(1-p_i).
\end{align}
Similarly, we can write the equations for $E[X^2]$ as follows:
\begin{align} 
    E[X^2] &= 1+ (1-p_0)(2E_1+ E[X_{1}^2]),\label{8}\\
    E[X_{i}^2] &= 1+ (1-p_i)(2E_{i+1}+ E[X_{i+1}^2]),  i \in \{1,...,m-1\},\label{9}\\
    E[X_{m}^2] &= 1+ (1-p_m)(2E_{m}+ E[X_{m}^2])=\frac{2-p_m}{p_{m}^2}.\label{10}
\end{align} 
By solving equations \eqref{8}, \eqref{9}, and \eqref{10} and performing algebraic simplifications, we derive the expression for $E[X^2]$ as follows:
\begin{align} \label{var}
   & 1+ \sum_{i=0}^{m-2} \prod_{j=0}^{i}(1-p_j)+2\sum_{i=0}^{m-1}E_{i+1}\prod_{j=0}^{i}(1-p_j)\nonumber\\&+E[X_{m}^2]\prod_{i=0}^{m-1}(1-p_{i}).
\end{align}
Accordingly, we can write the expression of $\operatorname{Var}[X]$. 

\begin{theorem}\label{thm:optimal_m1}
If $m=1$, the variance of $X$ is given by $\operatorname{Var}[X] = \frac{(1 + p_0 - p_1)(1 - p_0)}{p_1^2}$. Considering this, the optimal values of $p_0, p_1$ depend on the relationship between $k$ and $\frac{n}{2}$. Specifically:
\begin{itemize}
    \item If $k \leq \frac{n}{2}$, then the optimal values are $[p_0^*, p_1^*] = [0, \frac{k}{n-k}]$, resulting in a variance of $\frac{(n-k)(n-2k)}{k^2}$.
     \item If $k \geq \frac{n}{2}$, then the optimal values are $[p_0^*, p_1^*] = [\frac{2k-n}{k}, 1]$, resulting in a variance of $\frac{(n-k)(2k-n)}{k^2}$.
\end{itemize}
\end{theorem}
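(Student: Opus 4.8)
The plan is to specialize the general steady-state and expectation formulas to the case $m=1$ and then carry out a two-variable constrained optimization. First I would write down the $m=1$ version of the steady-state equations: from \eqref{eq:pi_0} with $m=1$ we get $\pi_0 = \frac{k}{n} = \frac{p_1}{p_1 + (1-p_0)}$ and $\pi_1 = \frac{1-p_0}{p_1+(1-p_0)}$. The constraint $\pi_0 = k/n$ becomes a single linear relation between $p_0$ and $p_1$, namely $n p_1 = k\bigl(p_1 + 1 - p_0\bigr)$, equivalently $(n-k)p_1 = k(1-p_0)$. This reduces the problem to one free parameter; I would use $p_0$ as the free variable, so that $p_1 = \frac{k(1-p_0)}{n-k}$, with the feasibility requirement $0 \le p_0 \le 1$ and $0 \le p_1 \le 1$.

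Next I would derive $\operatorname{Var}[X]$ for $m=1$. Using $E_1 = 1/p_1$ and \eqref{e0} giving $E_0 = 1 + \frac{1-p_0}{p_1} = \frac{n}{k}$ (consistent with the constraint above), and using \eqref{var} with $m=1$, which reads $E[X^2] = 1 + 2E_1(1-p_0) + E[X_1^2](1-p_0)$ where $E[X_1^2] = \frac{2-p_1}{p_1^2}$, I would substitute and simplify. A short computation should yield $E[X^2] = 1 + \frac{2(1-p_0)}{p_1} + \frac{(2-p_1)(1-p_0)}{p_1^2}$, and subtracting $E[X]^2 = \bigl(1 + \frac{1-p_0}{p_1}\bigr)^2$ should collapse to the claimed closed form $\operatorname{Var}[X] = \frac{(1 + p_0 - p_1)(1-p_0)}{p_1^2}$. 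This is the routine algebra part and I would just verify it rather than belabor it.

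The substantive step is the optimization. Substituting $p_1 = \frac{k(1-p_0)}{n-k}$ into $\operatorname{Var}[X] = \frac{(1+p_0-p_1)(1-p_0)}{p_1^2}$ gives a function of the single variable $p_0$ on the feasible interval. I would compute its derivative in $p_0$, determine monotonicity, and conclude that the optimum sits at an endpoint of the feasible interval; the location of that endpoint flips depending on whether $p_1 = \frac{k(1-p_0)}{n-k} \le 1$ can be achieved with $p_0 = 0$, which happens exactly when $k \le n-k$, i.e. $k \le n/2$. When $k \le n/2$, the feasible range allows $p_0 = 0$, forcing $p_1 = \frac{k}{n-k} \le 1$, and plugging in gives variance $\frac{(n-k)(n-2k)}{k^2}$; one then checks the objective is minimized there (increasing in $p_0$). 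When $k \ge n/2$, $p_0 = 0$ is infeasible since it would require $p_1 > 1$; instead the binding constraint is $p_1 = 1$, which via the linear relation forces $p_0 = \frac{2k-n}{k}$, giving variance $\frac{(n-k)(2k-n)}{k^2}$, and again one checks optimality at this endpoint. I expect the main obstacle to be handling the feasibility region carefully — making sure both $p_0,p_1 \in [0,1]$ throughout, identifying which constraint is active in each regime, and confirming the sign of the derivative of $\operatorname{Var}[X]$ in $p_0$ so that the claimed endpoints are genuine minimizers rather than maximizers. A clean way to organize this is to reparametrize by $p_1$ directly (since $1-p_0 = \frac{(n-k)p_1}{k}$), write $\operatorname{Var}[X]$ as a function of $p_1$ alone, and analyze that single-variable function on its feasible interval for $p_1$, whose endpoints are $p_1 = k/n$ (from $p_0=1$, degenerate) and $p_1 = \min\{1, k/(n-k)\}$.
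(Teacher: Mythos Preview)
Your proposal is correct and follows essentially the same route as the paper: use the constraint $E_0=n/k$ to eliminate one variable, then show the variance is monotone in the remaining one so the optimum sits at a boundary of the feasible region, with the active boundary switching at $k=n/2$. The paper's execution is slightly slicker than substituting into the closed form and differentiating: it applies the constraint $1+\tfrac{1-p_0}{p_1}=\tfrac{n}{k}$ twice to rewrite the variance directly as
\[
\operatorname{Var}[X]=1+\frac{n}{k}-\Bigl(\frac{n}{k}\Bigr)^{2}+\frac{2\bigl(\tfrac{n}{k}-1\bigr)}{p_1},
\]
from which monotonicity in $p_1$ is immediate and the only remaining work is reading off the largest feasible $p_1$. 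One small slip in your final parenthetical: on the constraint $(n-k)p_1=k(1-p_0)$, setting $p_0=1$ gives $p_1=0$, not $p_1=k/n$; the feasible interval for $p_1$ is $\bigl(0,\min\{1,\tfrac{k}{n-k}\}\bigr]$, which does not affect your conclusion since the optimum lies at the upper endpoint anyway.
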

\begin{proof}
We simplify the constraint \eqref{e0} to obtain:
\begin{align}
1+\frac{1-p_0}{p_1}=\frac{n}{k}.    \label{eq:constraint_m1}
\end{align}
By writing down equations for $\operatorname{Var}[X]$, we have:
\begin{align}
    \operatorname{Var}[X]&= 1+(1-p_0)\frac{2+p_1}{p_1^2} - \left(\frac{n}{k}\right)^2\\
    &= 1+ \frac{n}{k}-\left(\frac{n}{k}\right)^2+ 2\frac{1-p_0}{p_1^2}\label{24}\\
    &= 1+ \frac{n}{k}-\left(\frac{n}{k}\right)^2+ 2 \frac{\frac{n}{k}-1}{p_1}\label{25},
\end{align}
where in \eqref{24} and \eqref{25}, we use \eqref{eq:constraint_m1}.
In order to minimize $\operatorname{Var}[X]$, we should set $p_1$ to its maximum value within its feasible range given the constraint \eqref{eq:constraint_m1}. Since $0 \leq p_0 \leq 1$, constraint \eqref{eq:constraint_m1} implies $\frac{1}{p_1}\geq \frac{n}{k}-1$, or $p_1 \leq \frac{k}{n-k}$. When $k \leq \frac{n}{2}$, $\frac{k}{n-k} \leq 1$ and it becomes the maximum feasible value for $p_1^*$. Consequently, we can find $p_0^*=0$ from $1+\frac{1-p_0}{p_1}=\frac{n}{k}$. If $k \geq \frac{n}{2}$, then the maximum feasible value for $p_1^*$ becomes $1$ and then we can find $p_0^*= \frac{2k-n}{k}$. 
\end{proof}
In both scenarios of Theorem \ref{thm:optimal_m1}, the variance of the Markov model is less than $\frac{n(n-k)}{k^2}$, which is the variance achieved by random selection.

\begin{theorem}\label{thm:optimal_general}
Consider an arbitrary $m \ge 1$.
\begin{itemize}
    \item When $m \le \lfloor \frac{n}{k} \rfloor-1 $, the optimal values of \(p_i\) for minimizing the variance are $[p_0^*, p_1^*, \dots, p_{m-1}^*, p_m^*] = [0, 0, \dots, 0, \frac{1}{\frac{n}{k} - m}]$, and the minimum variance is $(\frac{n}{k}-m)(\frac{n}{k}-(m+1))$.
    \item When $m \ge \lfloor \frac{n}{k} \rfloor$, setting $i = \lfloor \frac{n}{k} \rfloor$,
    the optimal values are $[p_0^*, p_1^*, \dots, p_{i-2}^*, p_{i-1}^*, p_{i}^*, \dots, p_m^*] = [0, 0, \dots, 0, i+1 -\frac{n}{k}, 1, \dots, 1]$, and the minimum variance is 
    $c(1-c)$, where $c = \frac{n}{k}-\lfloor \frac{n}{k} \rfloor$. 
\end{itemize}
\end{theorem}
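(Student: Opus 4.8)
The plan is to reduce the optimization to a single scalar quantity, just as in the proof of Theorem \ref{thm:optimal_m1}, and then argue extremality by a monotonicity/feasibility argument. First I would rewrite $\operatorname{Var}[X] = E[X^2] - E[X]^2$ using \eqref{var} together with $E[X] = E_0 = \frac{n}{k}$ from the constraint \eqref{e0}. The key algebraic step is to simplify $E[X^2]$ in \eqref{var}: substituting the telescoping relations $E_k = 1 + \sum_{i=k}^{m-2}\prod_{j=k}^{i}(1-p_j) + \frac{1}{p_m}\prod_{i=k}^{m-1}(1-p_i)$ from \eqref{ek} and collecting terms, I expect the cross terms $2\sum_{i=0}^{m-1} E_{i+1}\prod_{j=0}^{i}(1-p_j)$ to combine with the remaining pieces so that $\operatorname{Var}[X]$ becomes an \emph{affine} function of one auxiliary variable — concretely something of the form $\operatorname{Var}[X] = \frac{n}{k}\bigl(2m+1-\frac{n}{k}\bigr) - m(m+1) + 2\,\frac{\,\frac{n}{k}-m-(1-p_0)\cdots\,}{p_m}$ or, more usefully, $\operatorname{Var}[X] = \text{(const depending only on }n/k,m\text{)} + 2\bigl(E_0 - 1 - \textstyle\sum_{i=0}^{m-2}\prod_{j=0}^{i}(1-p_j)\bigr)\cdot\frac{1}{p_m}\cdot(\text{something})$. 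The payoff is that minimizing $\operatorname{Var}[X]$ reduces to making a single ratio as small as possible.

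Once the variance is in this reduced form, the optimization splits into the two stated regimes by the same logic used for $m=1$. In both regimes the constraint \eqref{e0}, namely $1 + \sum_{i=0}^{m-2}\prod_{j=0}^{i}(1-p_j) + \frac{1}{p_m}\prod_{i=0}^{m-1}(1-p_i) = \frac{n}{k}$, is the only coupling between the $p_i$'s, and each $p_i \in [0,1]$. When $m \le \lfloor n/k\rfloor - 1$, I would argue that the objective is decreasing in each $p_i$ for $i<m$ and that pushing all of $p_0,\dots,p_{m-1}$ to $0$ is feasible: the constraint then forces $1 + (m-1) + \frac{1}{p_m} = \frac{n}{k}$, i.e. $p_m = \frac{1}{n/k - m}$, which lies in $(0,1]$ precisely because $m \le n/k - 1$. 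Plugging $p_0=\cdots=p_{m-1}=0$ and this $p_m$ into the reduced variance formula and simplifying should yield $(\tfrac{n}{k}-m)(\tfrac{n}{k}-m-1)$; I would double-check consistency with Theorem \ref{thm:optimal_m1} at $m=1$, $k\le n/2$, where it gives $(\tfrac{n}{k}-1)(\tfrac{n}{k}-2) = \frac{(n-k)(n-2k)}{k^2}$, matching.

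For the regime $m \ge \lfloor n/k\rfloor$, write $i = \lfloor n/k\rfloor$ and $c = \frac{n}{k} - i \in [0,1)$. Here $p_m=1$ is feasible (indeed the high-index probabilities want to be as large as possible), so $E_m = 1$ and more generally the "tail" states $i,\dots,m$ should all be set to fire immediately, $p_i=\cdots=p_m=1$; the constraint then reads $1 + \sum_{j=0}^{i-2}\prod(1-p_\ell) + \prod_{\ell=0}^{i-1}(1-p_\ell) = \frac{n}{k}$, and setting $p_0=\cdots=p_{i-2}=0$ collapses this to $1 + (i-1) + (1-p_{i-1}) = \frac{n}{k}$, giving $p_{i-1} = i+1 - \frac{n}{k} = 1-c \in (0,1]$. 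Substituting into the reduced variance expression should give $c(1-c)$; again I would sanity-check against Theorem \ref{thm:optimal_m1} for $m=1$, $k \ge n/2$ (so $i=1$, $c = \frac{n}{k}-1 = \frac{n-k}{k}$), where $c(1-c) = \frac{n-k}{k}\cdot\frac{2k-n}{k} = \frac{(n-k)(2k-n)}{k^2}$, matching. The main obstacle I anticipate is twofold: (a) carrying out the algebraic collapse of \eqref{var} into a clean affine-in-one-variable form without error, since the nested products and the $E_{i+1}$ substitutions are bookkeeping-heavy; and (b) rigorously justifying that the candidate assignment is a global minimum over the polytope $\{p\in[0,1]^{m+1}: \text{\eqref{e0} holds}\}$ rather than merely a boundary stationary point — this needs either a monotonicity argument showing the objective decreases as probability mass is shifted toward the highest feasible state, or an exchange argument transforming any feasible $p$ into the candidate without increasing the variance (equivalently, invoking the stated equivalence "optimal Markov model = select the oldest age" and showing that policy realizes exactly this $p$).
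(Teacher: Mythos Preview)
Your overall strategy matches the paper's: expand $\operatorname{Var}[X]$ via \eqref{var}, \eqref{e0}, \eqref{ek}, simplify, then extremize by a monotonicity/feasibility argument, splitting into the two regimes exactly as you outline. Your candidate optimizers and your sanity checks against Theorem~\ref{thm:optimal_m1} are correct.

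The one place your plan drifts from what actually happens is the expected form after simplification. The variance does \emph{not} collapse to an affine function of a single scalar; the paper arrives at
\[
\operatorname{Var}[X]=\Bigl(2(m-1)+\tfrac{n}{k}\Bigr)\Bigl(1-\tfrac{n}{k}\Bigr)
+\frac{2}{p_m}\Bigl(\tfrac{n}{k}-1-\sum_{i=0}^{m-2}\prod_{j=0}^{i}(1-p_j)\Bigr)
-2\sum_{i=0}^{m-2}\sum_{t=0}^{i-1}\prod_{j=0}^{t}(1-p_j),
\]
which still depends jointly on $p_0,\dots,p_{m-1}$ through the last double sum. In the first regime this is harmless: all variable pieces move in the same direction as each $p_i$ ($i<m$) decreases and $p_m$ increases, and those extremes are simultaneously feasible, exactly as you argue. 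In the second regime the paper's device for making the optimization rigorous is the substitution $y_t=\prod_{j=0}^{t}(1-p_j)$, which turns both the objective and the constraint into \emph{linear} functions of $(y_0,\dots,y_{m-1})$ on the ordered simplex $1\ge y_0\ge\cdots\ge y_{m-1}\ge 0$; after setting $p_m^*=1$ the constraint is $\sum_{i=0}^{m-1}y_i=\tfrac{n}{k}-1$ and the LP optimum is read off greedily (fill $y_0,y_1,\dots$ to $1$ until the budget is exhausted, giving one fractional entry). This is precisely the rigorous version of your ``shift probability mass toward the highest feasible state'' remark, and it is the missing ingredient you would need to close your obstacle~(b).
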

\begin{proof}
    Based on \eqref{var} and \eqref{10}, $\operatorname{Var}[X]$ can be calculated as:
    \begin{align}
    &1+ \sum_{i=0}^{m-2} \prod_{j=0}^{i}(1-p_j)+2\sum_{i=0}^{m-1}E_{i+1}\prod_{j=0}^{i}(1-p_j)\\&+\frac{2-p_m}{p_m^2}\prod_{i=0}^{m-1}(1-p_{i}) -E_0^2.
    \end{align}
Using the condition that $E_0=\frac{n}{k}$, we can simplify this expression as:
\begin{align} \label{var_exp}
    \operatorname{Var}[X]&= \frac{n}{k}-\left(\frac{n}{k}\right)^2  +2\sum_{i=0}^{m-2}E_{i+1}\prod_{j=0}^{i}(1-p_j)\nonumber\\
    &+\frac{1}{p_m}\prod_{i=0}^{m-1}(1-p_{i}) +\frac{2-p_m}{p_m^2}\prod_{i=0}^{m-1}(1-p_{i}).
\end{align}
Based on \eqref{ek}, we have:
\begin{align}
  &E_{i+1} \prod_{j=0}^{i}(1-p_j) \\
  =& \prod_{j=0}^{i}(1-p_j) +\sum_{t=i+1}^{m-2} \prod_{j=0}^{t} (1-p_j) + \frac{1}{p_m} \prod_{t=0}^{m-1}(1-p_t) \\
  =& \frac{n}{k}-1 +\prod_{j=0}^{i} (1-p_j) - \sum_{t=0}^{i} \prod_{j=0}^{t} (1-p_j).
\end{align}
Therefore we can further simplify the expression in \eqref{var_exp} as:
\begin{align}
    \operatorname{Var}[X]&= \frac{n}{k}- \left(\frac{n}{k}\right)^2 +\frac{2}{p_m^2}\prod_{i=0}^{m-1}(1-p_{i})\nonumber\\
    &+ 2 \sum_{i=0}^{m-2} \left(\frac{n}{k}-1 - \sum_{t=0}^{i-1} \prod_{j=0}^{t} (1-p_j)\right) \\
    &=\left(2(m-1)+\frac{n}{k}\right)\left(1-\frac{n}{k}\right)+\frac{2}{p_m^2}\prod_{i=0}^{m-1}(1-p_{i})\nonumber\\
    &-2\sum_{i=0}^{m-2}  \sum_{t=0}^{i-1} \prod_{j=0}^{t} (1-p_j)\\
    &=\left(2(m-1)+\frac{n}{k}\right)\left(1-\frac{n}{k}\right)\nonumber\\
    &+\frac{2}{p_m}\left(\frac{n}{k}-1-\sum_{i=0}^{m-2} \prod_{j=0}^{i} (1-p_j)\right)\nonumber\\&-2\sum_{i=0}^{m-2} \sum_{t=0}^{i-1} \prod_{j=0}^{t} (1-p_j)\label{34}.
\end{align}
From \eqref{34}, it is obvious that for $\operatorname{Var}[X]$ to be minimized, assuming other parameters are fixed, $p_i$ for $i \in \{0,1,...,m-1\}$ should be minimum in its feasible range where the condition in \eqref{e0} is satisfied. Similarly, by condition \eqref{e0}, the second term in \eqref{34} is positive, and $p_m$ should be maximum when other parameters are fixed. 

When $m \le \lfloor \frac{n}{k}\rfloor-1$, we show that the above minimum and maximum values can be achieved at the same time. From \eqref{e0}, we find the following upper bound for $p_m$ and it happens when all $p_i$ values are $0$:
\begin{align}
    \frac{n}{k} \leq 1+ m-1 + \frac{1}{p_m} 
    \implies
    p_m \leq \frac{1}{\frac{n}{k}-m}.
\end{align}
Consequently, by choosing $p_i^*=0$ for $i \in \{0,1,...,m-1\}$ and $p_m^*=\frac{1}{\frac{n}{k}-m}$, we achieve the minimum of $\operatorname{Var}[X]$ while satisfying the constraint in \eqref{e0}.

Next consider $m \ge \lfloor \frac{n}{k}\rfloor$.
By defining $y_t  \triangleq \prod_{j=0}^t (1-p_i)$ we can rewrite the expression of $\operatorname{Var}[X]$ and make further simplifications.  
Based on Equation \eqref{34}, we need to minimize:
\begin{align}
    \frac{2}{p_m}\left(\frac{n}{k}-1-\sum_{i=0}^{m-2} y_i\right)-2\sum_{i=0}^{m-2}\sum_{t=0}^{i-1}y_t. \label{eq:obj_y}
\end{align}
The constraint \eqref{e0} becomes
\begin{align}
    1+y_0+y_1+...+\frac{1}{q_m}y_{m-1}=\frac{n}{k}.\label{eq:constraint_y}
\end{align}
The constraint $0 \le p_i \le 1$, for all $i \in \{0,1,\dots,m-1\}$, becomes
\begin{align}
    0\leq y_{m-1} \leq y_{m-2} \leq ... \leq y_0 \leq 1.\label{eq:constraint_y2}
\end{align}
Increasing $p_m$ can only increase $y_i$'s due to \eqref{eq:constraint_y}, and hence reduce the objective function \eqref{eq:obj_y}. Thus, we choose $p_m^*$ to its maximum feasible value, $p_m^*=1$.
Now, minimizing \eqref{eq:obj_y} becomes maximizing
\begin{align}
     2\sum_{i=0}^{m-2} y_i +2\sum_{i=0}^{m-2}\sum_{t=0}^{i-1}y_t
    =2\sum_{i=0}^{m-2} ((m-1-i)y_i),\label{eq:33}
\end{align}
subject to $1+y_0+y_1+...+y_{m-1}=\frac{n}{k}$, and \eqref{eq:constraint_y2}. 
This is a linear programming optimization problem, where we start with $y_{0}$ and set it to the maximum value possible, and then continue with the rest of $y_i$'s. Let $i = \lfloor \frac{n}{k} \rfloor$, we should set $y_{0},...y_{i-2}$ to $1$, $y_{i-1}=\frac{n}{k}-i$, and $y_i,...,y_{m-1}$ to $0$. This translates to $p_0^*,...,p_{i-2}^*=0$, $p_{i-1}^*=1-(\frac{n}{k}-i)$, and $p_{i}^*,...,p_{m-1}^*=1$. 

Finally, the minimum variance can be computed by plugging the optimal $p_0^*,\dots,p_m^*$ into the expression of $\operatorname{Var}[X]$.
\end{proof}

\begin{rem}
The optimal Markov model in Theorem \ref{thm:optimal_general} resembles the oldest-age method, which selects $k$ clients with the highest age in each iteration.  
The general Markov model, although not fully utilized in our current implementation, holds significant potential due to its flexibility. One can adaptively choose the transition probabilities in the Markov model based on real-time performance, system conditions, and client dropout probabilities.
For example, one can set $p_i > 0$ for all $i\in \{0,1,\dots,m\}$, and a client can be selected even if it does not have the oldest age, increasing the chance of getting an update from the client before it drops out of the network.
\end{rem}
\begin{rem}
For fixed $\frac{n}{k}$, as we increase $m$, the optimal variance of $X$ gets smaller, and the optimal $\operatorname{Var}[X]$ reaches the minimum for all $m \ge \lfloor \frac{n}{k} \rfloor$. For any $m \le \lfloor \frac{n}{k} \rfloor -1$, the optimal $\operatorname{Var}[X]$ is given by $(\frac{n}{k}-m)(\frac{n}{k}-(m+1))$, which is smaller than the variance of the random selection policy, $\frac{n}{k}(\frac{n}{k}-1)$. Hence, for all choices of $m$, the optimal Markov model provides better load balancing than random selection.


\end{rem}

\section{Simulation Results}\label{sec:simulation}

In this section, we present the model's convergence and accuracy in scenarios where only a partial subset of clients $(15\%)$ participate in each communication round. We compare random selection with the proposed decentralized client selection policy based on the optimal Markov model. The experiments are conducted on the MNIST, CIFAR-10, and CIFAR-100 datasets. In our simulations, we utilize a convolutional neural network (CNN) as described in \cite{mcmahan2017communication} to classify samples from the datasets. We employ the stochastic gradient descent (SGD) algorithm as the local optimizer, standardizing the batch size at 50 during the local training phase. The local training epochs are set to 5 per round, with an initial learning rate of 0.1 and a learning rate decay of 0.998. In our tests, we configure the parameters as follows: the number of devices \(n=100\), the number of selected devices in a round \(k=15\), and for the Markov model the largest age \(m=10\). 

As illustrated in Figure \ref{fig1}, for the CIFAR-10 dataset, our proposed Markov model for client selection achieves faster convergence. Specifically, it reaches the target accuracy of 80\% in just 240 communication rounds, compared to 265 rounds required by the random selection method, representing a 9.4\% improvement in convergence speed.
Additionally, as depicted in Figure \ref{fig2}, for the CIFAR-100 dataset, we observe that the Markov model reaches the target accuracy of $40\%$ in just $500$ communication rounds, compared to over $600$ rounds required by the random selection method. This results in an improvement of more than $20\%$ in convergence speed.

\begin{figure}
\centering
\includegraphics[width=0.37\textwidth]{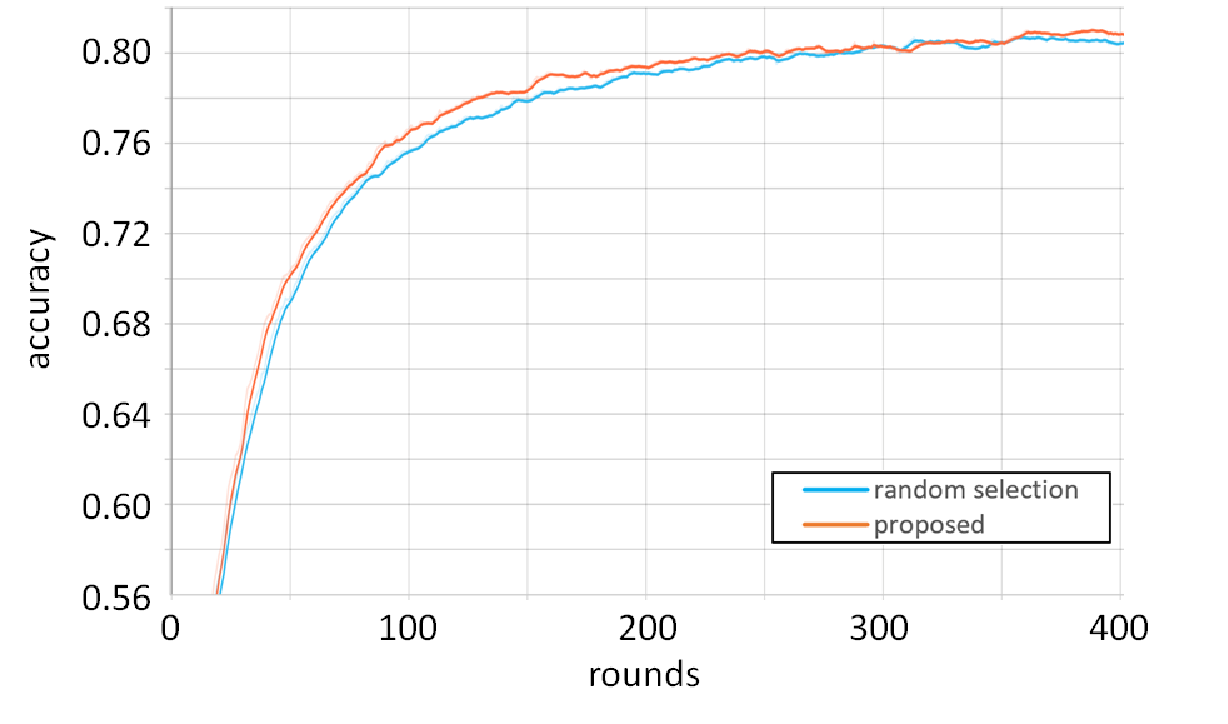} 
\caption{Comparison of accuracy between our proposed method (orange) and the random client selection method (blue) on the CIFAR-10 dataset with IID data distribution. The simulation parameters are $n=100$, $k=15$, and $m=10$.}
\label{fig1}
\end{figure}

\begin{figure}
\centering
\includegraphics[width=0.4\textwidth]{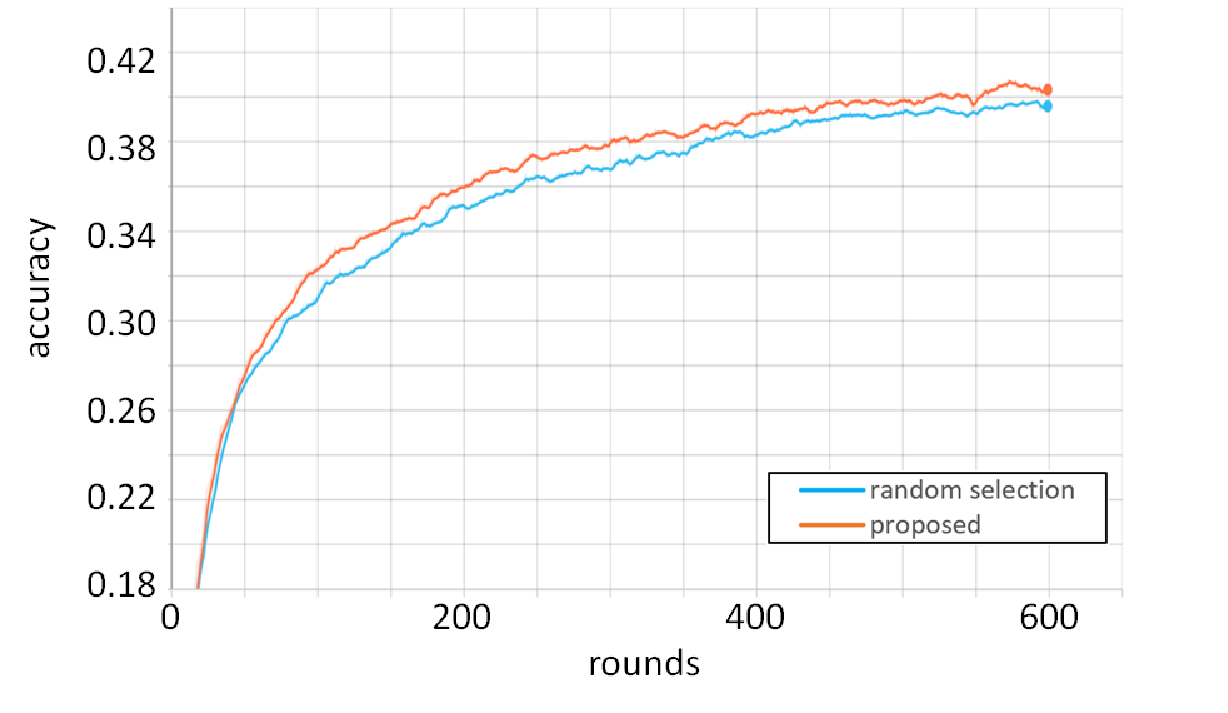} 
\caption{Comparison of accuracy between our proposed method (orange) and the random client selection method (blue) on the CIFAR-100 dataset with IID data distribution. The simulation parameters are $n=100$, $k=15$, and $m=10$.}
\label{fig2}
\end{figure}

In Figure \ref{fig3}, we present the performance analysis for the MNIST dataset under both IID and non-IID client data distributions. In the IID setting, our model achieves the target accuracy of $97\%$ in just $37$ communication rounds, which is faster than the $41$ rounds required by the random selection method, marking a $9.7\%$ increase in convergence speed. For the non-IID scenario, where client data follows a Dirichlet distribution with a parameter of $0.6$ \cite{yurochkin2019bayesian}, our model reaches the same target accuracy in $56$ rounds, compared to $64$ rounds with random selection, reflecting an improvement of $12.5\%$ in convergence speed.


\begin{figure}
    \centering
    \includegraphics[width=0.4\textwidth]{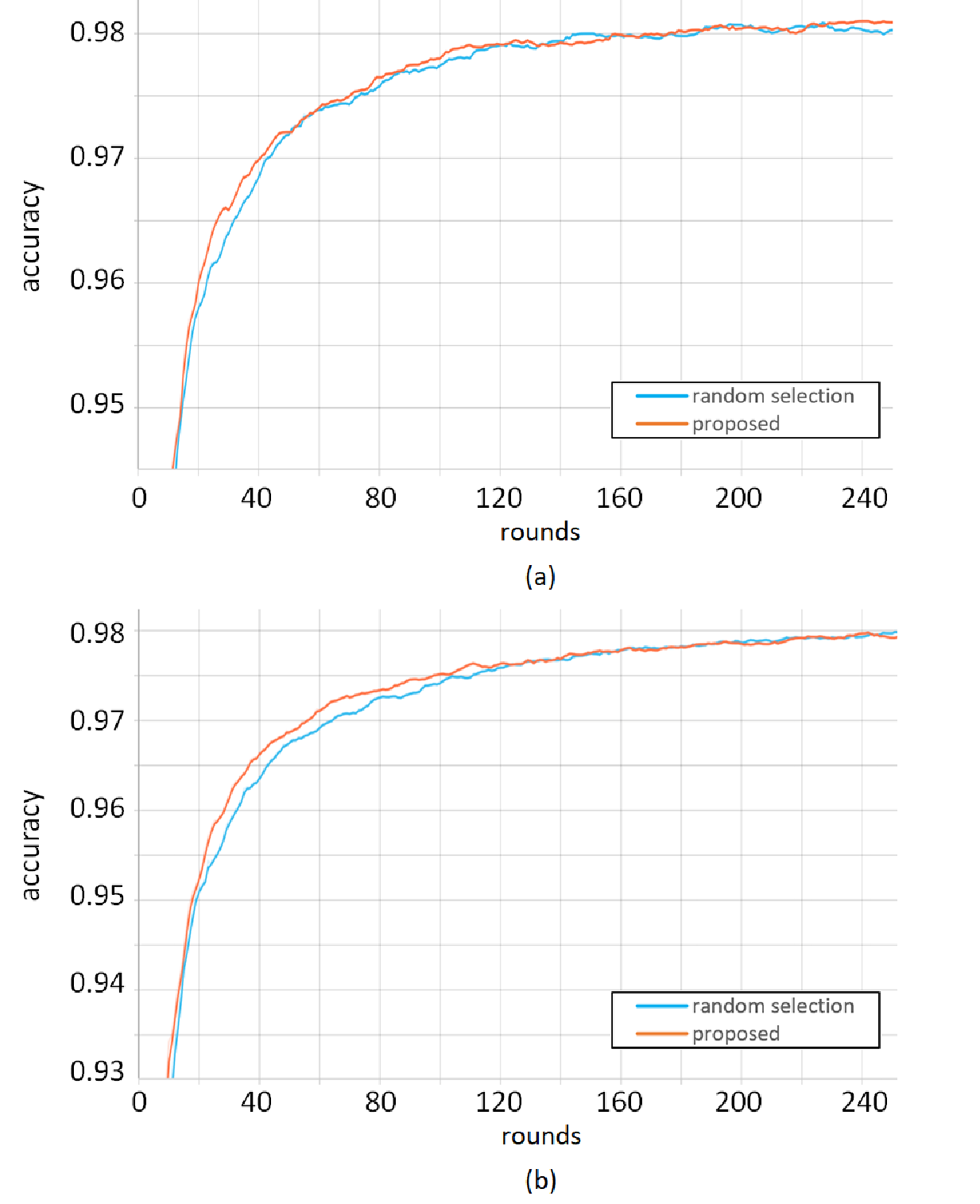}
    \caption{Comparison of accuracy between our proposed method (orange) and the random client selection method (blue) on the MNIST dataset with IID (top) and non-IID (bottom) data distribution. The simulation parameters are $n=100$, $k=15$, and $m=10$.}
    \label{fig3}
\end{figure}

\section{Conclusion}\label{sec:conclusion}
In conclusion, our research suggests that the decentralized client selection policy based on the Markov chain can lead to load balancing and efficient training in federated learning.

Future investigations should focus on a dynamic Markov chain policy for federated learning, adapting to the ongoing changes in network and client activity. By dynamically adjusting transition probabilities, we could enhance control over the Age of Information and improve responsiveness, thereby optimizing the trade-off between equitable client participation and overall system performance. 
\printbibliography
\end{document}